\newcommand{\dataset}{\mathcal{D}}
\newcommand{\outset}{\mathbb{R}^n}
\DeclareMathOperator*{\argmin}{arg\,min}
\newcommand{\refhere}{\textbf{\textcolor{red}{[Reference here]}}}
\newcommand{\Rb}{\ensuremath \mathbb{R}}
\newcommand{\indim}{\ensuremath m}
\newcommand{\outdim}{\ensuremath n}
\newcommand{\paramdim}{\ensuremath p}
\newcommand{\paramSet}{\ensuremath W}
\newcommand{\Prob}[1]{
    \ifthenelse{\equal{#1}{X}}{
        \ensuremath \mu
    }{
        \ifthenelse{\equal{#1}{Y}}{
            \ensuremath \nu
        }{
            \ensuremath \lambda
        }
    }
}
\newcommand{\empProb}[2]{
    \ifthenelse{\equal{#1}{X}}{
        \ensuremath \hat \mu_{#2}
    }{
        \ifthenelse{\equal{#1}{Y}}{
            \ensuremath \hat \nu_{#2}
        }{
            \ensuremath \hat \lambda_{#2}
        }
    }
}
\newcommand{\dirac}[1]{\ensuremath \delta_{#1}}
\newcommand{\minimize}{\ensuremath \mathrm{minimize}}
\newcommand{\subjecto}{\ensuremath \mathrm{subject~to}}
\newcommand{\perf}{\phi^{\mathtt{Perf}}}
\newcommand{\tool}{\textsc{RAC-BNN}\xspace}
\newcommand{\xy}[1]{{\color{magenta} {#1}}}
\newcommand{\cvar}{\text{CVaR}}
\title[Risk-Averse Certification of BNNs]{Risk-Averse Certification of Bayesian Neural Networks}
\author{
\Name{Xiyue Zhang}$^1$\thanks{Current address: School of Computer Science, University of Bristol, UK (xiyue.zhang@bristol.ac.uk)}\Email{xiyue.zhang@cs.ox.ac.uk} \\
\Name{Zifan Wang}$^2$\Email{zifanw@kth.se} \\
\Name{Yulong Gao}$^3$ \Email{yulong.gao@imperial.ac.uk} \\
\Name{Licio Romao}$^4$ \Email{licro@dtu.dk} \\
\Name{Alessandro Abate}$^1$ \Email{alessandro.abate@cs.ox.ac.uk} \\
\Name{Marta Kwiatkowska}$^1$ \Email{marta.kwiatkowska@cs.ox.ac.uk} \\
\addr $^1$  Department of Computer Science, University of Oxford, UK\\
\addr $^2$  Division of Decision and Control Systems, KTH Royal Institute of Technology, Sweden\\
\addr $^3$ Department of Electrical and Electronic Engineering, Imperial College London, UK\\
\addr $^4$  Department of Wind and Energy Systems, Technical University of Denmark, Denmark
}
\begin{document}

\maketitle

\begin{abstract}
In light of the inherently complex and dynamic nature of real-world environments, incorporating risk measures is crucial for the robustness evaluation of deep learning models.
In this work, we propose a \textbf{R}isk-\textbf{A}verse \textbf{C}ertification framework for Bayesian neural networks called \tool.
Our method leverages sampling and optimisation to compute a sound approximation of the output set of a BNN, represented using a set of template polytopes. 
To enhance robustness evaluation, we integrate a coherent distortion risk measure—Conditional Value at Risk (CVaR)—into the certification framework, providing probabilistic guarantees based on empirical distributions obtained through sampling.
We validate \tool on a range of regression and classification benchmarks and compare its performance with a state-of-the-art method.
The results show that \tool effectively quantifies robustness under worst-performing risky scenarios, and achieves tighter certified bounds and higher efficiency in complex tasks.
\end{abstract}
\begin{keywords}%
Uncertainty, Bayesian neural networks, risk measure, probabilistic certification%
\end{keywords}
\section{Introduction}

There has been  growing interest in formal verification of neural networks~~\citep{huang2017safety,katz2017reluplex,zhang2018crown,singh2019deeppoly,tjeng2019evaluating,xu2020automated}, in particular when  deploying deep neural models to  safety- and security-critical systems, autonomous vehicles~\citep{Bojarski16,Codevilla18}, healthcare systems~\citep{Alipanahi15}, and cyber security~\citep{DahlSDY13,ShinSM15}. Different from deterministic neural networks which learn a fixed set of weights and biases from a set of training data, Bayesian neural networks (BNNs) provide a principled approach to modelling uncertainty \citep{neal2012bayesian} and  learn a posterior distribution over these network parameters. 
During inference, BNNs quantify uncertainty and assign high uncertainty values to out-of-distribution inputs instead of being overconfident in wrong predictions~\citep{kahn2017uncertainty}. 
At the same time, 
the stochastic nature of BNNs complicates certification, as both the model parameters and, as a result, the predictive outputs are probability distributions rather than point estimates. 
Even when applying relaxation-based certification techniques to BNNs, the computational complexity can increase drastically.

In order to reliably deploy BNN solutions and reason about their safety in the presence of uncertainty, techniques have been developed to handle stochastic constraints within BNNs and compute certified bounds on their reachable outputs. These approaches typically fall into two categories: sampling-based techniques, which provides probabilistic guarantees~\citep{cardelli2019statistical, wicker2020probabilistic,michelmore2020uncertainty}, and approximation-based techniques, which evaluate the robustness of BNNs by computing the expectation of the output distributions over an input set \citep{Adams23, wicker2024adversarial}.
While approximation techniques significantly increase the scalability of evaluating larger-size BNNs, they often introduce relaxation losses to the certified output range.
Such relaxation can lead to conservative output bounds, limiting the precision of robustness evaluations.
Furthermore, existing works focus on bounding the expectation of the entire output distribution.
However, in real-world decision-making scenarios, considering the average performance over the full distribution may not suffice. Instead, it is important to account for challenging scenarios by adopting a \textit{risk-averse} perspective; 
that is, to evaluate robustness under adverse conditions (e.g., the most adversarially unstable 25\% cases).


In this work, we highlight the importance of a risk-averse perspective for BNN certification.
Specifically, we propose a principled approach to BNN certification that incorporates coherent distortion risk measures -- Conditional Value at Risk (CVaR) \citep{rockafellar2000optimization} -- which enables flexible and targeted evaluation of BNN performance.
The key idea of our method is to sample the input points and the parameters of the BNN weights, obtaining the empirical output distribution, to compute a sound approximation of the output set (using template polytopes) and certified CVaR bounds with  probabilistic guarantees. 
We implement our method as a prototype tool, \tool, and demonstrate that it achieves tighter certification bounds  with better efficiency than state-of-the-art techniques on a range of regression and classification benchmarks.
To the best of our knowledge, \tool is the only method capable of computing certified bounds under different risk levels (denoted by $\alpha$), 
enabling the flexibility between analysing average robustness over the entire output distribution ($\alpha=1$) and evaluating robustness against worst-performing outcomes ($\alpha<1$).


\section{Related Work}
We now discuss closely related works in the certification of BNNs
and risk-averse learning.

\paragraph*{\emph{Robustness Certification of BNNs}} The last decade has witnessed a growing interest in formal certification of neural networks, including complete verification methods based on constraint solving \citep{huang2017safety,katz2017reluplex,tjeng2019evaluating} and incomplete verifiers based on convex relaxation \citep{zhang2018crown,singh2019deeppoly,xu2020automated}. However, these methods all assume deterministic neural networks with fixed weights and thus cannot be directly applied to certify BNNs.
To this end, a series of certification techniques have been proposed for the certification of BNNs \citep{cardelli2019statistical, wicker2020probabilistic,Adams23,wicker2024adversarial}.

\cite{cardelli2019statistical} proposed a statistical approach to estimate the probability of the existence of adversarial examples with \textit{a priori} guarantees by viewing the robustness of a BNN as a Bernoulli random variable.  
\cite{wicker2020probabilistic} focused on the probabilistic robustness of BNNs, that is, the probability of the sampled weights from the posterior for which the resulting deterministic neural network satisfying a safety property. The method computes a certified lower bound for the probabilistic safety based on relaxation techniques of interval and linear bound propagation. 
These bounds are later generalised in
\cite{pmlr-v161-wicker21a}, where they are applied to bound sequential decisions on BNN-based models, and specifically in \cite{WICKER2024104132} for reach-avoid (bounded-until) specifications. 
\cite{wicker2024adversarial} further investigated decision robustness, which focuses on the decision step aligned with Bayesian decision theory, as also used in this work, and proposed a unified approach to compute certified lower and upper bounds for both probabilistic robustness and decision robustness.
\cite{Adams23} leveraged dynamic programming to bound the output range of BNNs over an input region. 
There are also certification methods that are able to reason about the robustness of the closed-loop systems where BNNs are applied for decision-making. 
\cite{michelmore2020uncertainty} introduced a statistical framework to evaluate the safety of end-to-end BNN controllers in autonomous driving.

\paragraph*{\emph{Risk-averse learning}}
Risk-averse learning has emerged as a critical area in machine learning, particularly for applications where decisions have significant consequences under uncertainty. Traditional machine learning models often focus on minimising expected loss, which may not adequately capture the potential for rare but severe adverse outcomes. To solve this, researchers have investigated risk-averse approaches that consider not just the expected performance but also the tail risks \cite{vitt2019risk,lakdawalla2021health,o2018modeling,tamar2015policy}. 
For example, \cite{vitt2019risk} introduced a risk-averse classification framework leveraging coherent risk measures to address class-specific misclassification risks, demonstrating its effectiveness through applications to support vector machines.
In the field of healthcare engineering, \cite{lakdawalla2021health} introduced the risk-adjusted cost-effectiveness framework, which integrates risk aversion and diminishing returns into health technology assessments. By accounting for tail risks and variability in treatment outcomes, this approach addresses limitations of traditional cost-effectiveness analysis, particularly for severe illnesses and uncertain interventions.

\section{Preliminaries and Problem Formulation}
\label{sec:problem-statement}

Next we introduce the necessary background and notations to be employed throughout the paper.

\paragraph*{\emph{Notation}}
We denote the input space by $\mathcal{X} \subseteq \Rb^\indim$, the output space by $\mathcal{Y} \subseteq \Rb^\outdim$, and the parameter space by $\mathcal{\paramSet} \subseteq \Rb^\paramdim$.
We denote by $\mathcal{P}(\mathcal{X})$ the set of probability distribution over $\mathcal{X}$, that is,
$\mathcal{P}(\mathcal{X}) = \left\{ \Prob{X}: \int_\mathcal{X} \Prob{X}(d\xi) = 1, \Prob{X} \geq 0 \right\}$,
and similarly for $\mathcal{Y}$ and $\mathcal{W}$. 
For a finite collection of points $\{x_1, \ldots, x_N\}$ in $\mathcal{X}$, we denote the corresponding empirical distribution as
$\empProb{X}{N}(x) = \frac{1}{N} \sum_{i = 1}^N \dirac{x_i} (x)$, 
where $\dirac{x_i} (x)$ denotes the Dirac measure centered at $x_i$.
Similarly, for points in the output space $\mathcal{Y}$ and parameter space $\mathcal{\paramSet}$, we denote the empirical distribution by $\empProb{Y}{N}$ and $\empProb{\paramSet}{N}$, respectively. Given two probability distributions $\Prob{X}$ and $\Prob{X}'$ defined on the input space, i.e., $\Prob{X}, \Prob{X}' \in \mathcal{P}(\mathcal{X})$, we denote by $W_1(\Prob{X},\Prob{X}')$ the \textit{type-1 Wasserstein distance} between these measures, defined as 
\begin{equation}
    W_1(\Prob{X},\Prob{X}') = \inf_{\pi \in \Pi(\Prob{X},\Prob{X}')} \int_{\mathcal{X}\times \mathcal{X}} \| \xi_1 - \xi_2 \| d\pi( \xi_1,  \xi_2),
    \label{eq:Kant-metric}
\end{equation}
where $\Pi(\Prob{X}, \Prob{X}')$ is the set of couplings (or joint distributions) with marginals given by $\Prob{X}$ and $\Prob{X}'$.

Conditional Value at Risk (CVaR) is a coherent risk measure.
For a random variable $X$ with the cumulative distribution function (CDF) denoted by $F$ and a specified risk level $\alpha \in (0,1]$, the CVaR value is given by ${\text{CVaR}}_{\alpha}[X] = \mathbb{E}_F[X| X>  {\text{VaR}}_{\alpha}[X] ]$, where ${\text{VaR}}_{\alpha}[X] = {\text{inf}}\{ y: F_X(y) \geq 1-\alpha\}$ represents the $1-\alpha$ quantile of the distribution, also known as the Value at Risk (VaR). 
Intuitively, CVaR captures the average of the worst-case outcomes within the upper $\alpha \%$ of the distribution.

\paragraph*{\emph{Bayesian Neural Networks}} In this section, we define Bayesian neural networks (BNNs) and review related concepts using the notation introduced in the previous section.

\begin{definition}[Bayesian Neural Network]\label{defi:BNN} 
Given a distribution $\Prob{W}\in \mathcal{P}(\mathcal{W})$ over the parameter space $\mathcal{W}$, a Bayesian Neural Network (BNN) is defined as a continuous stochastic function $f:\mathcal{X} \times \mathcal{W}\mapsto \mathcal{Y}$, where the weight $w$ is sampled from the distribution $\Prob{W}$, i.e., $w \sim \Prob{W}$.
\end{definition}

In the training of BNNs, we start with a prior distribution  $p(w)$ over the parameters $w$ and then compute the posterior distribution $p(w|\dataset)$  conditioned on dataset $\dataset= \{(x_i, y_i)\in \mathcal{X}\times \mathcal{Y}: i = 1, \ldots, N\}$.  Note that  the measure $\Prob{W} \in \mathcal{P}(\mathcal{W})$ in Definition~\ref{defi:BNN}  refers to  the  posterior distribution $p(w | \dataset)$.
With dataset $\dataset$ observed, the prior distribution of a BNN is updated according to the likelihood, $p(\dataset | w) = \prod_{i=1}^{n_\dataset} p(y_i | x_i, w)$, which models how likely the outputs are observed under the stochasticity of model parameters and the inputs.
The posterior distribution, given the dataset, is then computed by virtue of the Bayes formula, i.e., $p(w | \dataset) \propto p(\dataset | w) p(w)$. 
In practice, the posterior distribution $p(w | \dataset)$ can be obtained by  different 
inference techniques, e.g., Hamiltonian Monte Carlo (HMC) \citep{neal2012bayesian}, Variational Inference (VI) \citep{blundell2015weight}, and Monte Carlo Dropout (MCD) \citep{gal2016dropout}.

The posterior $p(w | \dataset)$ then induces the distribution over outputs called the posterior predictive distribution for an input point $x^*$, which is defined as $   p(y^* | x^*, \mathcal{D}) = \int p(y^{*} | x^{*}, w)p(w | \mathcal{D}) dw$. 
The final decision is obtained using Bayesian decision theory for regression and classification, which selects the value $\hat{y}$ that minimises the corresponding loss function $\mathcal{L}$ averaged over the predictive distribution: $\hat{y} = \argmin_{y} \int_{\outset} \mathcal{L}(y, y^*) p(y^* | x^*, \dataset) dy^*$.

\begin{wrapfigure}{r}{0.4\textwidth}
    \centering
    \vspace{-0.5cm}
\includegraphics[width=0.4\columnwidth]{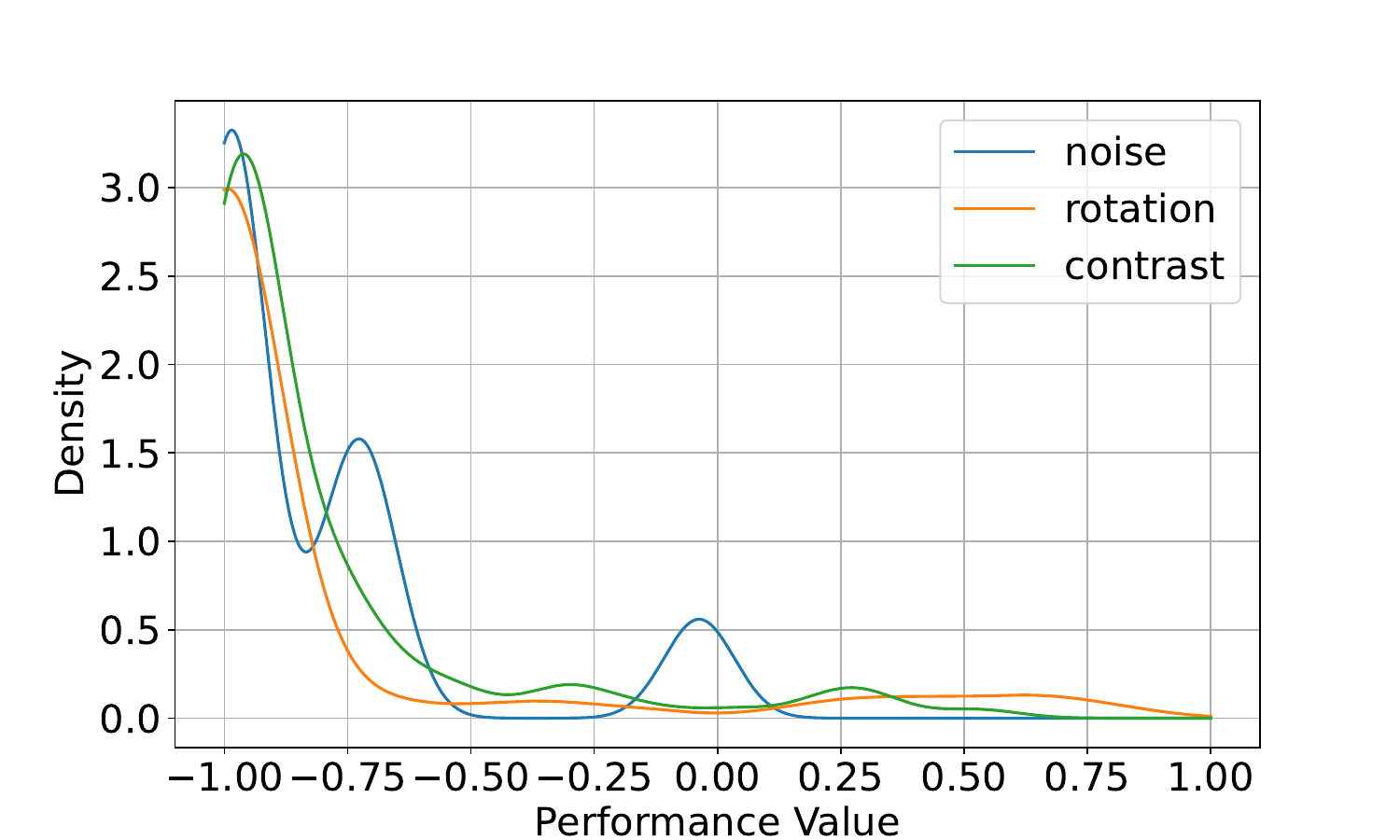} 
\caption{Tail distributions exist in Bayesian neural networks when recognising images with different types of perturbations.}
\label{fig:tail_perf_dist_pert}
\end{wrapfigure}
\paragraph*{\emph{Motivating Example}}
In this section, we present a motivating example to demonstrate why investigating the expectation of the entire distribution alone is insufficient to evaluate the performance of BNNs. In high-stakes applications, outputs that deviate significantly from the safe region and lead to catastrophic consequences are unacceptable, even if their probability of occurrence is low. In such cases, relying solely on the expectation of the outputs fails to account for the risks, as illustrated in the following example. 

We consider the MNIST classification task and visualise the empirical output distribution of a BNN under three types of perturbations: Gaussian noise applied to all image pixels, rotation that alters image orientation, and changes in brightness contrast.
To evaluate the robustness performance, we define the function $h(y)=\max_{t \in [10]\setminus c} y_t-y_c$, where $y_t$ denotes the random variable of the BNN output for the second-largest probability class, and
$y_c$ denotes the BNN output for the ground-truth class $c$. 
The value of $y$ ranges from -1 to 1, with -1 indicating the BNN is robust and correctly classifies the input, while 1 indicates the BNN makes a confident incorrect decision.
A value $h(y) > 0$ indicates that the BNN is not robust to the perturbations. 

As shown in Figure~\ref{fig:tail_perf_dist_pert}, we observe tail distribution bumps under Gaussian noise within the interval around $[-0.1, 0.1]$, contrast perturbations within the interval around $[0.1, 0.3]$ and under rotation perturbations lying around $[0.25, 0.75]$.
These tails correspond to cases where the BNN confidently makes incorrect predictions. In risk-sensitive or safety-critical applications, such tail risks can lead to severe consequences.
Standard expected values fail to capture the reliability of a BNN in these high-risk scenarios, highlighting the need for risk-averse certification techniques.
\subsection{Problem Statement} 
Consider a BNN $f$, with the input $x \sim \Prob{X}$ and the parameters $w \sim \Prob{W}$, and the output given by $y = f(x,w)$.
Let $\nu$ denote the distribution of $y$. We assume that the output set $\mathcal{Y}$ is compact. This assumption holds for many tasks of BNNs. One can further render this assumption true by mapping the output to a prescribed compact set. 
For a given risk-averse level $\alpha$ and an evaluation function $h:\mathcal{Y} \rightarrow \mathbb{R}$, we define the risk-averse evaluation as  $\perf= \text{CVaR}_{\alpha,y \sim \Prob{Y}}[h(y)]$.
%
In this work, we evaluate a BNN through the following problems. 
\begin{enumerate}
\setlength\itemsep{-0.2em}
    \item 
    Output support set computation: approximate  $\mathcal{Y}=\{f(x,w) \mid x\in\mathcal{X},w\in\mathcal{W}\}$;
    \item Risk-averse evaluation:   compute certified bounds on the CVaR value of the BNN output.
\end{enumerate}


In the motivating example, evaluating the performance of the BNN involves addressing two interrelated tasks: approximating the output set of $y_t$ and $y_c$, and computing the CVaR value of the performance function $h$. Approximating the output set enables a visual assessment of the BNN's outputs, allowing us to determine whether they fall within a safe region and meet desirable criteria. However, the set approximation alone cannot capture the probabilistic information about the likelihood of outputs lying in the safe region. The second task, i.e., computing the CVaR value, addresses 
this drawback, as it quantifies the tail risks by focusing on the most extreme and potentially hazardous outcomes. Together, these tasks provide a comprehensive framework for assessing the reliability and robustness of the BNN in high-stakes applications, where both the nature of the outputs and their risk profiles are crucial considerations.

\section{Methodology}

\subsection{Output Set Approximation}
In this section, we present our sampling-based solution to approximate the output support set. 
According to the posterior distribution $\Prob{W}$ and the input distribution $\Prob{X}$, we first collect a group of i.i.d. sampled inputs $x_i\in \mathcal{X}$ and a group of i.i.d. sampled parameters $w_j\in \mathcal{W}$. 
Based on these collected samples, we can then compute the corresponding output samples  $y_{ij}=f(x_i,w_j)$.
We rewrite the output samples as $y_k$ for notation simplicity and use $N$ to denote the total number of samples.

For the output set approximation, we aim to compute a convex approximation of $\mathcal{Y}$.
Leveraging the output samples, we build the approximation by taking the intersection of all half-spaces that contain $\{y_k\}_{k=1}^{N}$.
This convex hull for the output samples and the tractable over-approximations can be conveniently represented as a template polytope, which provides high-confidence guarantees for the approximation gap by applying the scenario optimisation theory to our problem.  

Consider a convex template polytope  
$\mathbb{V}=\{z\in \mathbb{R}^n\mid Vz\leq \bm{1}\}$
where $V\in\mathbb{R}^{L\times n} $ and $L$ is the number of half spaces or inequalities.  
Given the set $\mathbb{V}$, and $\bm{ \theta}\in \mathbb{R}^{L}$, we introduce a parameterised set in the form of $\mathcal{H}(\bm{ \theta}):=	\{z\in \mathbb{R}^{n}\mid  V z\leq \bm{ \theta}\}.$
%
We now approximate the output set $\mathcal{Y}$ by computing the optimal parameterised set $\mathcal{H}(\bm{ \theta}_N^\star)$ with respect to the output samples, where  $\bm{ \theta}_N^\star$ is the optimal solution to following optimisation problem 
\begin{eqnarray}\label{Opt:quanset}
	\begin{cases}
		&\min\limits_{\bm{\theta} \in \mathbb{R}^L}\quad  \bm{1}^T  \bm{\theta}    \\
		& \hspace{0.1cm}{\text s.t} \quad \quad
		
		Vy_k \leq \bm{ \theta} , k=1,\cdots,N.
		
	\end{cases}
\end{eqnarray}  


The optimisation result is presented in the following proposition. 
\begin{proposition}\label{Prop: risk bound}
	The optimal solution $\bm{ \theta}_N^\star$ to the optimisation problem in Equation \eqref{Opt:quanset} is  
\begin{eqnarray}  
	[\bm{ \theta}_N^\star]_i=\max_{k=1,\cdots,N} [V]_i y_k,
\end{eqnarray}  
	where $[V]_i$ denotes the $i$-th row of $V$.  
Let $\hat{\mathcal{Y}}_N=\mathcal{H}(\bm{ \theta}_N^\star)$.  
Given $\epsilon_1\in (0,1)$,  $\beta_1\in (0,1)$, and the Euler's constant $\text{e}$, if $N\geq \frac{1}{\epsilon_1} \frac{\text{e}}{\text{e}-1} \Bigl(\ln\frac{1}{\beta_1} + n+L \Bigr)$,
then, with probability no less than $1-\beta_1$, $\mathbb{P}[y\in \mathcal{Y}: y\notin \hat{\mathcal{Y}}_N] =	\int_ {\mathcal{Y} \setminus \hat{\mathcal{Y}}_N}\Prob{Y}(d \xi)
 \leq  \epsilon_1$.
\end{proposition}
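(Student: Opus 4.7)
The plan is to split the argument into two clean parts: first a direct linear-programming computation to identify the minimiser in closed form, then a scenario-optimisation argument to certify the probabilistic approximation guarantee.

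For the first part, I would observe that the objective $\bm{1}^T \bm{\theta}=\sum_i [\bm{\theta}]_i$ is separable across the components of $\bm{\theta}$, and that each constraint $V y_k \leq \bm{\theta}$ decomposes row-wise into $[V]_i y_k \leq [\bm{\theta}]_i$. Hence the optimisation~\eqref{Opt:quanset} decouples into $L$ scalar problems, each of the form $\min_{[\bm{\theta}]_i}[\bm{\theta}]_i$ subject to $[\bm{\theta}]_i \geq [V]_i y_k$ for $k=1,\ldots,N$. The solution is the tightest lower bound, namely $[\bm{\theta}_N^\star]_i = \max_{k}[V]_i y_k$, which immediately yields the stated closed form.

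For the second part, I would recast the problem as a scenario convex program in the decision variable $\bm{\theta} \in \mathbb{R}^L$ with i.i.d. scenarios $y_k \sim \Prob{Y}$ and convex (in fact linear) constraint $g(\bm{\theta}, y) = \max_i([V]_i y - [\bm{\theta}]_i) \leq 0$. A fresh sample $y \sim \Prob{Y}$ lies outside $\hat{\mathcal{Y}}_N=\mathcal{H}(\bm{\theta}_N^\star)$ precisely when some row constraint is violated, so the event $\{y \notin \hat{\mathcal{Y}}_N\}$ coincides with the violation event of the scenario solution. The standard scenario theorem (e.g.\ Campi--Garatti / Calafiore) then guarantees that the violation probability exceeds $\epsilon_1$ with probability at most $\sum_{i=0}^{d-1}\binom{N}{i}\epsilon_1^i(1-\epsilon_1)^{N-i}$, where $d$ counts the effective decision variables. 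Bounding this tail by $\beta_1$ via the textbook inequality $\sum_{i=0}^{d-1}\binom{N}{i}\epsilon^i(1-\epsilon)^{N-i} \leq \beta$ whenever $N \geq \tfrac{1}{\epsilon}\tfrac{\mathrm{e}}{\mathrm{e}-1}\bigl(\ln(1/\beta)+d-1\bigr)$ delivers the sample-complexity condition, after identifying $d$ with the appropriate complexity measure for the problem.

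The main obstacle will be to justify the specific dimension count $n+L$ appearing in the bound, rather than a naive $L$ from the ambient dimension of $\bm{\theta}$. This likely comes from accounting simultaneously for the $L$ decision variables of the template and the $n$ dimensions of the output samples (which enter via the constraint geometry), or equivalently from bounding the Helly-type support dimension of the scenario program. I would verify this by invoking the version of the scenario bound that explicitly covers separable/support-dimension settings, or by a direct VC-style argument on the half-space family $\{ y : [V]_i y \leq t\}$ which has combinatorial dimension governed by $n+L$. Once the dimension count is pinned down, substituting into the textbook sample-size inequality completes the proof.
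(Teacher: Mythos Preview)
Your proposal is correct in structure and closely mirrors the paper's own argument. Both proceed by (i) decoupling the LP row-wise into $L$ scalar problems to obtain the closed form, and (ii) viewing \eqref{Opt:quanset} as a scenario program and invoking a scenario-optimisation bound for the probabilistic guarantee. The paper is considerably terser on step (ii): it writes down the robust counterpart over all $y\in\mathcal{Y}$, identifies \eqref{Opt:quanset} as the associated scenario LP, notes the standing compactness assumption on $\mathcal{Y}$, and then cites \cite[Theorem~4]{AlamoACC} directly for the stated sample complexity---it does not pass through the Campi--Garatti binomial tail or the textbook inversion inequality at all.

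On your flagged obstacle: you are right that the $n+L$ count is not what the classical Campi--Garatti support-constraint bound would deliver (that argument is driven by the decision dimension, here $L$). The paper does not justify $n+L$ independently either; the constant is simply inherited from the cited Alamo et al.\ result, so your proposed VC/Helly detour is unnecessary to reproduce the proof. If your goal is to match the paper's statement verbatim, invoke the same reference; if instead you carry your Campi--Garatti route through, you should expect a bound governed by $L$ rather than $n+L$, which is no weaker.
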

 \begin{proof}
    Given the optimisation problem in Equation \eqref{Opt:quanset}, it is easy to see that the constraint $Vy_k \leq \bm{ \theta}$, $k=1,\cdots,N$, should be active element-wise with the optimal solution $\bm{ \theta}_N^\star$. That is, $[\bm{ \theta}_N^\star]_i=\max_{k=1,\cdots,N} [V]_i y_k$. Furthermore, the robust counterpart of the optimisation problem \eqref{Opt:quanset} is \begin{eqnarray}\label{Opt:robustquanset}
	\begin{cases}
		&\min\limits_{\bm{\theta} \in \mathbb{R}^L}\quad  \bm{1}^T  \bm{\theta}    \\
		& \hspace{0.1cm}{\text s.t} \quad \quad
		
		Vy \leq \bm{ \theta} , \forall y\in \mathcal{Y},
		
 	\end{cases}  
\end{eqnarray}  where Equation \eqref{Opt:quanset} is its corresponding scenario LP, where  $\{y_k,k=1,\cdots,N\}$ is the set of i.i.d. samples from the  set $\mathcal{Y}$. Since we assume that the set $\mathcal{Y}$ is compact, \cite[Theorem 4]{AlamoACC} yields the stated sample complexity and the confidence guarantee in Proposition~\ref{Prop: risk bound}. 
 \end{proof}

Proposition~\ref{Prop: risk bound}  provides a statistical bound on the discrepancy between the estimated output set $\hat{\mathcal{Y}}_N$ and the true output set $\mathcal{Y}$. The error bound $\epsilon_1$ has an inverse relationship with $N$, indicating that achieving tighter error bounds necessitates a significantly larger sample size.
The dimensionality of the sample space, $n$, and of the structured polytope, $L$, contributes linearly to the sample size, reflecting the increased effort needed to address higher-dimensional or structurally complex problems. Besides, the term $\ln\frac{1}{\beta_1}$ introduces a logarithmic dependence on the confidence level. These results indicate that tighter precision and higher confidence come at the cost of increased computational and data collection demands.

\subsection{Risk-Averse Evaluation}\label{sec:risk-averse-eval}
Given the output samples $\{y_k\}^N_{k=1}$, we define $\empProb{Y}{N}(y) = \frac{1}{N} \sum_{i = 1}^N \dirac{y_i} (y)$ as the empirical distribution of $\nu$.
Our goal is to estimate the value of $\cvar_{\alpha, y \sim \Prob{Y}}[h(y)]$, which represents the CVaR value of the performance function $h$ at level $\alpha$ under the distribution $\Prob{Y}$. 
Specifically, we aim to construct a certified bound for this value with probabilistic confidence guarantees. 
We present the following lemmas that are useful in constructing confidence bounds.
\begin{lemma}\label{lemma:evadis} 
 \cite{boskos2023high}
Let $\empProb{Y}{N}(y) = \frac{1}{N} \sum_{i = 1}^N \dirac{y_i} (y)$ be the empirical distribution of $\nu$, where $y_i \in \mathbb{R}^n$.
Given $\beta\in (0,1)$, we have $\mathbb{P}( W_1(\Prob{Y},\empProb{Y}{N})\geq \epsilon_2) \leq \beta$,
 \begin{align}
    \mathbb{P}( W_1(\Prob{Y},\empProb{Y}{N})\geq \epsilon_2) \leq \beta
 \end{align}
 where $\epsilon_2 = \rho(\mathcal{Y})(C^{*} N^{-\frac{1}{n}} + \sqrt{n}(2 \ln \beta^{-1})^{\frac{1}{2}} N^{-\frac{1}{2}})$, $C^* = \sqrt{n} 2^{(n-2)/(2)}\Big( \frac{1}{1-2^{1-n/2}} + 2\Big)$, $\rho(\mathcal{Y})$ is the diameter of the support of $y$.  
 \end{lemma}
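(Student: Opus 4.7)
The plan is to split the target bound into a deterministic bound on $\mathbb{E}[W_1(\nu, \empProb{Y}{N})]$ and a high-probability concentration bound on the deviation of $W_1(\nu, \empProb{Y}{N})$ from its mean. Writing
$$W_1(\nu, \empProb{Y}{N}) = \mathbb{E}\bigl[W_1(\nu, \empProb{Y}{N})\bigr] + \Bigl(W_1(\nu, \empProb{Y}{N}) - \mathbb{E}\bigl[W_1(\nu, \empProb{Y}{N})\bigr]\Bigr),$$
the two summands in the stated $\epsilon_2$ are exactly matched: $\rho(\mathcal{Y}) C^{*} N^{-1/n}$ corresponds to the expectation bound, while $\rho(\mathcal{Y}) \sqrt{n} (2\ln\beta^{-1})^{1/2} N^{-1/2}$ is the McDiarmid tail term.

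For the expectation, I would follow the dyadic/multi-scale partition argument of Fournier--Guillin. The idea is to cover $\mathcal{Y}$, which has diameter $\rho(\mathcal{Y})$, by $\ell^\infty$-cubes at successive scales $2^{-k}\rho(\mathcal{Y})$ for $k = 0, 1, 2, \ldots$, and decompose the optimal transport cost telescopically across these scales. At each scale, a combination of Cauchy--Schwarz and the multinomial moment bound $\mathbb{E}\bigl[\sum_{j} |N_j/N - p_j|\bigr] \leq \sqrt{|\mathcal{C}_k|/N}$, where $|\mathcal{C}_k| \leq 2^{nk}$ is the number of cells, controls the mass mismatch on that scale. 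Summing the resulting geometric series in $k$, and carefully tracking the conversion between $\ell^\infty$ cube diameter and Euclidean distance, yields the rate $\rho(\mathcal{Y}) C^{*} N^{-1/n}$ with the explicit constant $C^{*} = \sqrt{n}\, 2^{(n-2)/2}\bigl(\tfrac{1}{1-2^{1-n/2}} + 2\bigr)$.

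For the concentration around the mean, I would invoke McDiarmid's bounded-differences inequality. By the Kantorovich--Rubinstein dual formulation $W_1(\mu, \mu') = \sup_{\|f\|_{\mathrm{Lip}} \leq 1} \int f\, d(\mu - \mu')$, replacing a single sample $y_i$ by $y_i'$ changes the empirical measure by $\tfrac{1}{N}(\dirac{y_i'} - \dirac{y_i})$ and hence shifts $W_1(\nu, \empProb{Y}{N})$ by at most $\|y_i - y_i'\|/N \leq \sqrt{n}\,\rho(\mathcal{Y})/N$ (the $\sqrt{n}$ arises from relating the $\ell^\infty$ diameter used in the covering argument to the Euclidean metric). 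McDiarmid then gives
$$\mathbb{P}\Bigl(W_1(\nu, \empProb{Y}{N}) - \mathbb{E}\bigl[W_1(\nu, \empProb{Y}{N})\bigr] \geq t \Bigr) \leq \exp\!\left( -\frac{2 N t^2}{n\, \rho(\mathcal{Y})^2} \right),$$
and inverting at confidence level $\beta$ yields the $\rho(\mathcal{Y})\sqrt{n}(2\ln\beta^{-1})^{1/2} N^{-1/2}$ summand.

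Combining, the event $\{ W_1(\nu, \empProb{Y}{N}) \geq \epsilon_2 \}$ is contained in the McDiarmid deviation event, and hence has probability at most $\beta$, which is the claim. The main obstacle is the first step: the chaining/covering computation has to be executed with enough bookkeeping to extract the explicit constant $C^{*}$ in the stated form (rather than an unspecified dimension-dependent constant), since dimension-free simplifications like Dudley's entropy integral give only order-of-magnitude rates. In contrast, the McDiarmid concentration step is essentially mechanical once the bounded-differences constant $\sqrt{n}\rho(\mathcal{Y})/N$ has been identified from the dual representation of $W_1$.
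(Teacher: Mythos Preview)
The paper does not supply its own proof of this lemma: it is quoted verbatim from \cite{boskos2023high} and used as a black box, so there is no in-paper argument to compare against. Your outline---bound $\mathbb{E}[W_1(\nu,\empProb{Y}{N})]$ by a dyadic covering argument \`a la Fournier--Guillin to obtain the $\rho(\mathcal{Y})C^{*}N^{-1/n}$ term, and control the fluctuation $W_1(\nu,\empProb{Y}{N}) - \mathbb{E}[W_1(\nu,\empProb{Y}{N})]$ via McDiarmid's bounded-differences inequality applied through the Kantorovich--Rubinstein dual---is exactly the standard route by which such explicit-constant Wasserstein concentration bounds are obtained in that literature, and is consistent with the derivation in the cited source. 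In that sense your proposal is correct and matches the (external) proof the paper relies on.

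One small point worth tightening: your justification for the $\sqrt{n}$ factor in the McDiarmid step hinges on $\rho(\mathcal{Y})$ being an $\ell^\infty$ diameter while $W_1$ is taken in the Euclidean norm. If instead $\rho(\mathcal{Y})$ already denotes the Euclidean diameter, the bounded-difference constant is simply $\rho(\mathcal{Y})/N$ and the extra $\sqrt{n}$ would have to come from elsewhere (or the stated bound is slightly loose). This is a bookkeeping ambiguity inherited from the cited result rather than a flaw in your strategy, but you should state explicitly which norm defines $\rho(\mathcal{Y})$ when you write up the argument.
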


\begin{lemma}\label{lemma:cva-wasserstein} \cite{wang2024risk}
 Suppose $h(y)$ is $L_0$-Lipschitz in $y$. For any two probability distributions $\mathcal{D}_1$ and $\mathcal{D}_2$, we have $|{\cvar_{\alpha, y\sim \mathcal{D}_1}}  [h(y)]- \cvar_{\alpha, y\sim \mathcal{D}_2}  [h(y)]\vert \leq \frac{L_0}{\alpha} W_1(\mathcal{D}_1,\mathcal{D}_2).$
 \end{lemma}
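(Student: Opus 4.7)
}}
The plan is to reduce the CVaR comparison to comparing expectations of a suitably Lipschitz auxiliary function, then invoke Kantorovich--Rubinstein duality for the type-1 Wasserstein distance. The key ingredient is the Rockafellar--Uryasev variational representation
\begin{equation*}
\cvar_{\alpha, y \sim \mathcal{D}}[h(y)] \;=\; \min_{t \in \mathbb{R}} \Bigl\{\, t + \tfrac{1}{\alpha}\, \mathbb{E}_{y \sim \mathcal{D}}\bigl[(h(y) - t)^+ \bigr] \Bigr\},
\end{equation*}
which is standard for CVaR and will be cited as such.

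First, I would fix $t \in \mathbb{R}$ and consider the integrand $g_t(y) := t + \tfrac{1}{\alpha}(h(y) - t)^+$. Since the positive-part map $s \mapsto s^+$ is $1$-Lipschitz and $h$ is assumed $L_0$-Lipschitz in $y$, the composition and scaling yield that $g_t$ is $(L_0/\alpha)$-Lipschitz in $y$, uniformly in $t$. Next, by the Kantorovich--Rubinstein dual characterisation of $W_1$ (equivalently, by taking the infimum over couplings in the definition \eqref{eq:Kant-metric} and using the Lipschitz property),
\begin{equation*}
\Bigl| \mathbb{E}_{\mathcal{D}_1}[g_t(y)] - \mathbb{E}_{\mathcal{D}_2}[g_t(y)] \Bigr| \;\leq\; \tfrac{L_0}{\alpha}\, W_1(\mathcal{D}_1, \mathcal{D}_2),
\end{equation*}
for every $t$.

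Second, I would promote the pointwise (in $t$) bound to a bound on the minimum. Let $t_i^\star \in \arg\min_t \mathbb{E}_{\mathcal{D}_i}[g_t(y)]$ for $i = 1, 2$ (existence can be ensured by assuming $\mathcal{Y}$ compact, as already assumed in the problem statement; otherwise one argues via an $\varepsilon$-minimiser). Then
\begin{equation*}
\cvar_{\alpha, \mathcal{D}_1}[h] - \cvar_{\alpha, \mathcal{D}_2}[h] \;=\; \mathbb{E}_{\mathcal{D}_1}[g_{t_1^\star}] - \mathbb{E}_{\mathcal{D}_2}[g_{t_2^\star}] \;\leq\; \mathbb{E}_{\mathcal{D}_1}[g_{t_2^\star}] - \mathbb{E}_{\mathcal{D}_2}[g_{t_2^\star}] \;\leq\; \tfrac{L_0}{\alpha}\, W_1(\mathcal{D}_1,\mathcal{D}_2),
\end{equation*}
where the first inequality uses the optimality of $t_1^\star$ under $\mathcal{D}_1$. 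The symmetric argument swapping the roles of $\mathcal{D}_1$ and $\mathcal{D}_2$ yields the reverse inequality, and together they give the claimed absolute value bound.

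The main obstacle I anticipate is minor but worth being careful about: verifying that the auxiliary $g_t$ inherits the Lipschitz constant $L_0/\alpha$ cleanly (the additive $t$ drops out when differencing expectations, so it does not affect the Lipschitz analysis) and handling the $\min$-over-$t$ step rigorously when minimisers may fail to exist. This can be sidestepped by the standard trick of replacing $t_i^\star$ with an $\varepsilon$-optimal $t_i^\varepsilon$ and letting $\varepsilon \downarrow 0$; no assumption beyond $L_0$-Lipschitzness of $h$ and finiteness of the relevant expectations is needed.
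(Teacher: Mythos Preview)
Your argument is correct and is in fact the standard route to this inequality: the Rockafellar--Uryasev variational formula reduces CVaR to a minimum of expectations of the $(L_0/\alpha)$-Lipschitz functions $g_t$, Kantorovich--Rubinstein duality controls the expectation gap for each fixed $t$, and the one-sided optimality trick with $t_1^\star,t_2^\star$ (or $\varepsilon$-minimisers) transfers the bound to the infimum. Note, however, that the paper does not supply its own proof of this lemma at all---it is quoted directly from \cite{wang2024risk} as a black-box result---so there is no in-paper argument to compare against; your write-up simply fills in what the citation stands for.
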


 Based on Lemmas~\ref{lemma:evadis} and \ref{lemma:cva-wasserstein}, we construct the certified bound for BNN evaluation, which is presented in the following proposition. 
The proof follows from the above two lemmas and is omitted.

\begin{proposition}\label{prop:CVaR:Confi_bound}
Suppose that $h(y)$ is $L_0$-Lipschitz continuous in $y$. Then, we have $|\cvar_{\alpha, y \sim \empProb{Y}{N} }[h(y)] - \cvar_{\alpha, y \sim \Prob{Y}}[h(y)]| 
    \leq \frac{L_0}{\alpha} \epsilon_2(\beta)$,
with probability at least $1-\beta$, 
where $\epsilon_2(\beta) = \rho(\mathcal{Y})(C^{*} N^{-\frac{1}{n}} + \sqrt{n}(2 \ln \beta^{-1})^{\frac{1}{2}} N^{-\frac{1}{2}})$, $C^* = \sqrt{n} 2^{(n-2)/2}\Big( \frac{1}{1-2^{1-n/2}} + 2\Big)$, $\rho(\mathcal{Y})$ is the diameter of the support of $y$.
\end{proposition}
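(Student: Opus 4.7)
The plan is to chain Lemma \ref{lemma:cva-wasserstein} with Lemma \ref{lemma:evadis}, treating the empirical distribution $\empProb{Y}{N}$ and the true output distribution $\Prob{Y}$ as the two measures to compare. Since the statement only bounds the gap between the empirical and population CVaR by a quantity that scales with the Wasserstein distance, the argument is essentially a substitution.

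First, I would invoke Lemma \ref{lemma:cva-wasserstein} with $\mathcal{D}_1 = \empProb{Y}{N}$ and $\mathcal{D}_2 = \Prob{Y}$. The hypothesis that $h$ is $L_0$-Lipschitz in $y$ matches the lemma's assumption verbatim, so we immediately obtain the deterministic (sample-path) inequality
\begin{equation*}
|\cvar_{\alpha, y \sim \empProb{Y}{N}}[h(y)] - \cvar_{\alpha, y \sim \Prob{Y}}[h(y)]| \leq \frac{L_0}{\alpha} W_1(\empProb{Y}{N}, \Prob{Y}).
\end{equation*}
This step does not use any probabilistic information and holds for every realisation of the samples $\{y_k\}_{k=1}^N$.

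Next, I would apply Lemma \ref{lemma:evadis} to the random empirical measure $\empProb{Y}{N}$. The lemma, at confidence parameter $\beta$, guarantees that $W_1(\Prob{Y}, \empProb{Y}{N}) \leq \epsilon_2(\beta)$ holds on an event of probability at least $1-\beta$, where $\epsilon_2(\beta)$ has the explicit form stated in the proposition. The only things to verify are that $\mathcal{Y}$ is compact (so that $\rho(\mathcal{Y})$ is finite), which is assumed in the problem statement, and that the samples $y_k = f(x_k, w_k)$ are i.i.d.\ draws from $\Prob{Y}$, which follows from the i.i.d.\ sampling of $x_i$ and $w_j$.

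Finally, I would combine the two inequalities on the intersection event: on the high-probability event from Lemma \ref{lemma:evadis}, substituting the Wasserstein bound into the Lipschitz-CVaR inequality yields the stated bound $\frac{L_0}{\alpha}\epsilon_2(\beta)$ with probability at least $1-\beta$. There is no real obstacle here since both lemmas are off-the-shelf; the only conceptual care is to keep straight that Lemma \ref{lemma:cva-wasserstein} is pathwise deterministic while Lemma \ref{lemma:evadis} supplies the probabilistic content, so the confidence level $1-\beta$ in the conclusion inherits directly from Lemma \ref{lemma:evadis} without any union bound.
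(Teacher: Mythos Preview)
Your proposal is correct and matches the paper's approach exactly: the paper states that the proof ``follows from the above two lemmas and is omitted,'' and your chaining of Lemma~\ref{lemma:cva-wasserstein} (deterministic CVaR--Wasserstein Lipschitz bound) with Lemma~\ref{lemma:evadis} (high-probability Wasserstein concentration) is precisely that argument. Your additional remark that the confidence level is inherited solely from Lemma~\ref{lemma:evadis} without a union bound is accurate and worth keeping.
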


Proposition~\ref{prop:CVaR:Confi_bound} states that the distance between the empirical CVaR value and the true CVaR value is related to the sample size $N$.
Given a target certification range $H$ with probability at least $1-\beta$, the required number of samples is given by $\Big(  \frac{L_0\rho(\mathcal{Y}) ( C^{*} +\sqrt{n}(2 \ln \beta^{-1})^{\frac{1}{2}} ) }{\alpha H} \Big)^n$, where $n$ denotes the output dimension of the performance function. 
The sample size $N$ is inversely proportional to $H^n$, indicating that as the certified bound tightness $H$ decreases, $N$ increases exponentially with an exponent related to $n$. 
Additionally, a smaller $\alpha$, which focuses on the largest $\alpha \%$ of the distribution, necessitates a larger number of samples to accurately estimate the expected value within this specific portion of the distribution. Finally, $N$ depends logarithmically on the confidence level $\beta$, meaning that achieving higher confidence (smaller $\beta$) necessitates an increase in the sample size.

\section{Experiments}
In this section, we present the experiment setup and evaluation results of the proposed approach. 

\textbf{Benchmark and Baseline.} Following recent work \citep{Adams23}, we evaluate our method on three regression tasks, including a 1D noisy sine dataset where the BNN is trained on samples from 1D sine function with additive noise, a 2D equivalent of Noisy Sine, and the Kin8nm dataset where the BNN is trained on a dataset of state-space readings for the dynamics of an eight-link robot arm.
We further investigate the performance of our method on classification tasks where BNNs are trained on the MNIST and Fashion-MNIST datasets.
For each experiment, we evaluate the risk-averse robustness of BNN models against different noise and attack settings by computing the certified CVaR values under a range of risk levels.
Specifically, for risk level 1, we consider the state-of-the-art method BNN-DP in \cite{Adams23} for robustness certification of BNNs and conduct performance comparison in terms of the tightness of the certified bounds and the computation overhead.
All experiments are conducted on a cluster with Intel Xeon Gold 6252
2.1GHz CPU, and NVIDIA 2080Ti GPU.

\noindent\textbf{Evaluation metric.} 
To evaluate the certification performance of our approach,
we use $\gamma$-robustness \citep{Adams23}, which computes the difference between the upper and lower bounds on the expectation outptus of BNNs with regard to a property formulated by the performance function.
A smaller $\gamma$-robustness value implies a tighter certified bound computation.

 \vspace{-0.5cm}
\subsection{Evaluation Results}


\subsubsection{RQ1: Is our approach effective in characterising the output set?}

\begin{wrapfigure}{r}{0.4\textwidth}
    \centering
    \vspace{-0.5cm}
\includegraphics[width=0.4\columnwidth]{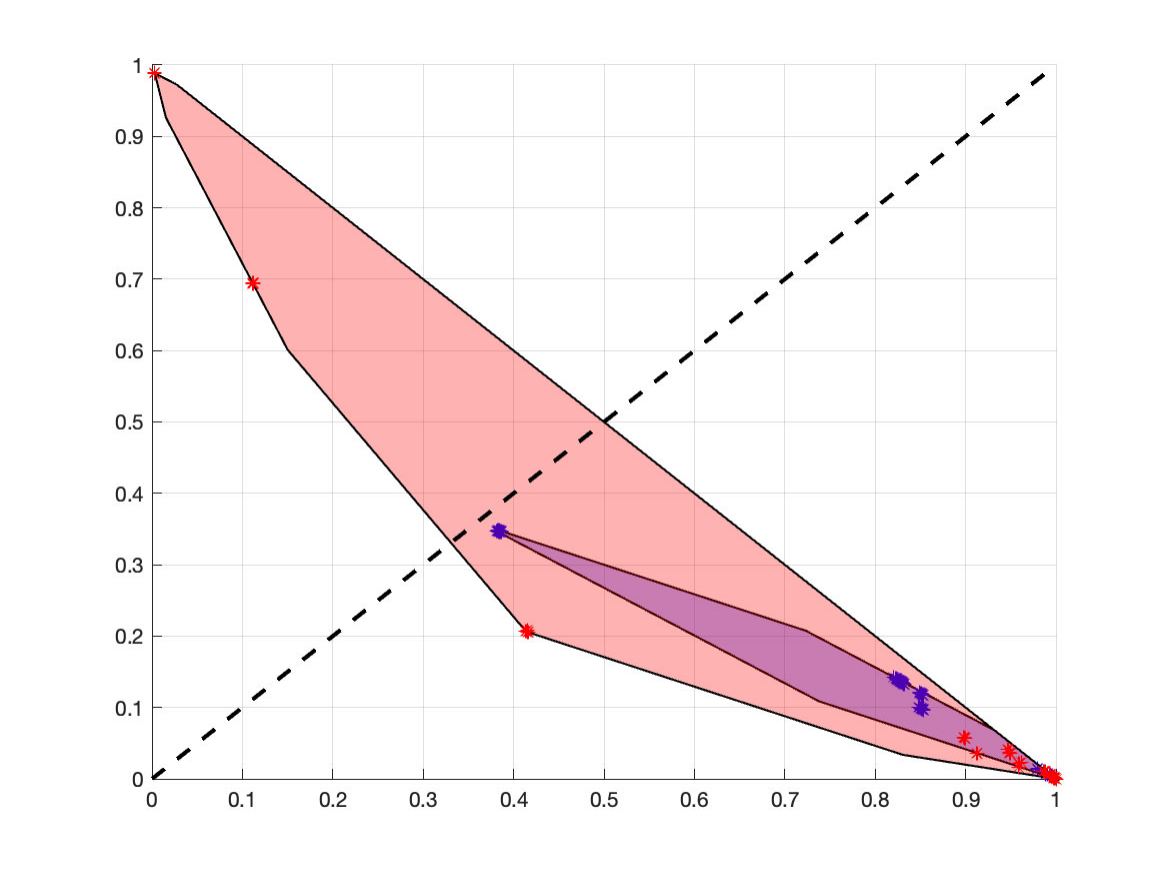} 
\caption{Output set computation for Bayesian neural networks when recognising images with different types of perturbations.}
\label{fig:tail_perf_dist}
\end{wrapfigure}
Consider again the specific case of the MNIST dataset under two distinct perturbation scenarios: rotation perturbation and noise perturbation. Figure~\ref{fig:tail_perf_dist} illustrates the relationship between two output variables of the BNN: $y_c$ representing the output probability for the true class $c$, and $y_t$ representing the output probability for the second-most likely class. 
The black dashed line $y_t = y_c$ serves as a threshold: if a data point lies below this line, the BNN classifies correctly, as the true class $y_c$ has a higher probability than $y_t$. 
Points marked in red and blue correspond to samples affected by rotation and noise perturbations, respectively. 
Given a desired error $0.05$ with a high confidence level of $95\%$, and using Proposition~\ref{Prop: risk bound}, we select $L=16$ and compute the required number of samples to be 665.

To systematically assess the BNN's robustness under such conditions, we propose computing the output support set, defined as the convex hull of all possible output pairs $(y_t,y_c)$ under each perturbation type. 
The output support set provides a visual and quantitative measure of BNN performance.
If the support set lies predominantly below the black line, the BNN demonstrates robustness against the perturbation.
Conversely, if a large portion of the support set extends above the black line, it indicates vulnerability.
Obviously, from Figure.~\ref{fig:tail_perf_dist}, the BNN shows robustness to noise perturbations but exhibits vulnerability to rotation perturbations.
This geometric perspective enables a straightforward evaluation of the BNN's performance.

\subsubsection{RQ2: Is our approach effective in characterising risk-averse performance of BNNs?}
To answer this question, we evaluate the effectiveness of our approach in characterising certified CVaR bounds under a range of risk levels. 
A comparison of certification performance for the entire output distribution ($\alpha=1$) with the baseline method \textit{BNN-DP} is deferred to Section \ref{subsec:cert_comparison}.

\begin{table}[!ht]
    \centering
  \footnotesize	
    \caption{Certified CVaR bounds under different confidence and risk levels for regression tasks.}
    \begin{tabular}{ccccccc}
        \toprule
        \multirow{2}{*}{\textbf{Tasks}} & \multicolumn{3}{c}{\textbf{$\beta=0.05$}} & \multicolumn{3}{c}{\textbf{$\beta=0.01$}} \\
        \cmidrule(lr){2-4} \cmidrule(lr){5-7}
         & \textbf{$\alpha = 1$} & \textbf{$\alpha = 0.5$} & \textbf{$\alpha = 0.25$} & \textbf{$\alpha = 1$} & \textbf{$\alpha = 0.5$} & \textbf{$\alpha = 0.25$} \\
        \midrule
        1D Noisy Sine & 0.074 {\scriptsize$\pm$ 0.1} & 0.146 {\scriptsize$\pm$ 0.1}& 0.147 {\scriptsize$\pm$ 0.1} & 0.058 {\scriptsize$\pm$ 0.1} & 0.148 {\scriptsize$\pm$ 0.1} & 0.165 {\scriptsize$\pm$ 0.1} \\
        2D Noisy Sine & 0.081 {\scriptsize$\pm$ 0.1} & 0.173 {\scriptsize$\pm$ 0.1} & 0.268 {\scriptsize$\pm$ 0.1} & 0.077 {\scriptsize$\pm$ 0.1} & 0.203 {\scriptsize$\pm$ 0.1} & 0.272 {\scriptsize$\pm$ 0.1} \\
        Kin8nm & 0.080 {\scriptsize$\pm$ 0.1} & 0.088 {\scriptsize$\pm$ 0.1} & 0.097 {\scriptsize$\pm$ 0.1} & 0.079 {\scriptsize$\pm$ 0.1} & 0.088 {\scriptsize$\pm$ 0.1} & 0.103 {\scriptsize$\pm$ 0.1}\\
        \bottomrule
    \end{tabular}
    \label{tab:risk-averse-regression}
\end{table}

\noindent\textbf{Regression benchmarks.}
We first evaluate the proposed method on BNNs trained with three regression datasets: 1D Noisy Sine, its 2D equivalent, and Kin8nm.
To assess the effectiveness of our method, we compute certified CVaR bounds under different levels of risk ($\alpha=0.25, 0.5, 1$) and confidence guarantees ($\beta=0.05, 0.1$).
Table \ref{tab:risk-averse-regression} summarises the evaluation results for the regression tasks.
For each configuration, the CVaR value represents the estimated expectation of the subset of the distribution of the performance function. The maximum deviation from the true value is constrained by a pre-defined limit of $H=0.1$.
For the 1D Noisy Sine task, we apply noise up to $0.01$ around the input point $\pi/2$ and the property of interest is the deviation from the ground truth output, with the performance function defined as $h(y)=1-y$.
As expected, when we focus on the worst-case outcomes, the CVaR values, which indicate the average deviation in the subset scenarios, show an increase.
For the 2D Noisy Sine task, we use the same perturbation noise (up to 0.01), whereas the perturbations are applied to two input features rather than one. The same performance function, $h(y)=1-y$, is used for evaluation.
In this case, CVaR values show a slight increases across all settings compared to the 1D dataset, largely due to the added noises to the 2D input space.

For the Kin8nm dataset, we 
simulate input perturbation noise up to 0.01 for eight input features and evaluate the output deviation from the ground truth, formulated using the performance function $h(y)=|y^{\ast} - y|$ where $y^{\ast}$ indicates the ground-truth value.
Notably, the estimated CVaR values demonstrate small variance across difference risk levels ($\alpha=1, 0.5, 0.25$).
As previously discussed, a consistent performance under varying risk levels is the ideal risk-averse robustness we aim to achieve for BNN certification.

\begin{table*}[ht]
\centering
\footnotesize	
\caption{Certified CVaR bounds for different attacks on classification tasks. 
}
\begin{tabular}{ccccc}
\toprule
\textbf{Tasks} & \textbf{CVaR Level} & \textbf{$L_{\infty}$ noise} & \textbf{Rotation} & \textbf{Contrast} \\
\midrule
\multirow{3}{*}{MNIST} & $\alpha = 1$   & -0.999 {\scriptsize$\pm$ 0.1} & -0.998 {\scriptsize$\pm$ 0.1} & -0.997 {\scriptsize$\pm$ 0.1} \\
& $\alpha = 0.5$ & -0.999 {\scriptsize$\pm$ 0.1} & -0.998 {\scriptsize$\pm$ 0.1} & -0.993 {\scriptsize$\pm$ 0.1} \\
& $\alpha = 0.25$ & -0.999 {\scriptsize$\pm$ 0.1} & -0.997 {\scriptsize$\pm$ 0.1} & -0.986 {\scriptsize$\pm$ 0.1} \\
\midrule
\multirow{3}{*}{FASHION} & $\alpha = 1$   & -0.191 {\scriptsize$\pm$ 0.1} & -0.119 {\scriptsize$\pm$ 0.1} & -0.152 {\scriptsize$\pm$ 0.1}\\
& $\alpha = 0.5$ & 0.443 {\scriptsize$\pm$ 0.1} & 0.525 {\scriptsize$\pm$ 0.1} & 0.562 {\scriptsize$\pm$ 0.1} \\
& $\alpha = 0.25$ & 0.885 {\scriptsize$\pm$ 0.1} & 0.964 {\scriptsize$\pm$ 0.1} & 0.849 {\scriptsize$\pm$ 0.1}\\
\bottomrule
\end{tabular}
\label{tab:cvar_values}
\end{table*}
\noindent\textbf{Classification Benchmarks.}
Table 2 summarises the certified bounds for classification tasks under different levels of risk $\alpha$.
We evaluate the risk-averse robustness of BNNs against three types of attacks: $L_{\infty}$ attack where 
perturbation noise up to a specific limit is applied to all image pixels, and two geometric attacks, rotation (altering the image's orientation) and contrast (changing its brightness and contrast). 
For each configuration, the CVaR value represents the estimated expected value of the performance function, w.r.t. a property of interest, over (the subset of) the distribution.

For the MNIST dataset, the perturbation limit for the $L_{\infty}$ attack is set to 0.1,  the rotation range to $[-45^{\circ}, 45^{\circ}]$,  and the contrast factor to 0.5.
Robustness is evaluated by checking whether the predicted labels remain consistent with the ground truth label, denoted as $c^*$ among the 10 classes. The corresponding performance function is defined as $h(y)=\max_{i \in [10] \setminus {c^*} }y_{i} - y_{c^*}$, where $c^*$ is the ground-truth label. 
The BNN is robust to perturbations if $h(y)<0$.
For both $L_{\infty}$ and \textit{rotation} attacks, our approach reveals that the trained BNN achieves strong certified risk-averse robustness, with very low variance in performance across the entire output distribution and the worst 25\% outcomes.
Under the \textit{contrast} attack, the BNN shows small variance when evaluating the overall expectation and the worse-performing 25\% subset.
Nonetheless, the expected values for the most challenging cases remain far below 0, demonstrating the BNN's risk-averse robustness across all attack types. 

For the FASHION dataset, we set the perturbation limit for the $L_{\infty}$ attack to 0.1, the rotation range to $[-15^{\circ}, 15^{\circ}]$, and the contrast factor to 0.9.
As with the MNIST dataset, We investigate output label consistency, formulated by the performance function $h(y)=\max_{i \in [10] \setminus {c^*} }y_{i} - y_{c^*}$. 
Note that the expectation values over the full output distribution for all three attack types are negative, indicating overall certified robustness.
However, when focusing on the worst-performing 50\% and 25\% subset of the output distribution, the expectation values are all positive, highlighting the necessity of risk-averse evaluation for BNNs.
Compared with the BNN for the MNIST dataset, this BNN demonstrates a lack of robustness, especially in scenarios requiring a risk-averse approach.


\begin{table}[h]
\caption{Comparison with SOTA in tightness of the certified bounds and computation time.}
    \label{tab:comparison}
    \centering
    \footnotesize
    \begin{tabular}{ccccccc}
        \toprule
        \multirow{2}{*}{\textbf{Method}}  & \multicolumn{2}{c}{1D Noisy Sine} & \multicolumn{2}{c}{Kin8nm} & \multicolumn{2}{c}{MNIST}\\
        \cmidrule(lr){2-3} \cmidrule(lr){4-5} \cmidrule(lr){6-7} 
        & $\gamma$-robustness & Time (s)  & $\gamma$-robustness & Time (s)  &$\gamma$-robustness & Time (s)  \\
        \midrule
        BNN-DP & \textbf{0.065} & 6.148 & 0.137 & 7.886 & 1.572 & 24.956\\
        \tool (0.1)  & 0.2 & \textbf{0.815} & 0.2 & \textbf{0.984} & 0.2 & \textbf{1.064}\\
        \tool (0.05) & 0.1 & 1.640 & \textbf{0.1} & 1.893 & \textbf{0.1} & 4.099\\
        \bottomrule
    \end{tabular}
    
\end{table}
\subsubsection{RQ3: What is the certification performance of our approach in tightness and efficiency?}\label{subsec:cert_comparison}

We perform comparison experiments with \textit{BNN-DP} in computing certified bounds for the entire output distribution (corresponding to $\alpha=1$).
The evaluation focuses on two metrics: the tightness of the certified bounds ($\gamma$-robustness) and the computation time of the certification procedure. 
Table~\ref{tab:comparison} summarised the results for both regression and classification tasks.
In the evaluation, we present the performance results of our method (\tool) under two certification range settings, $H=0.1$ and $H=0.05$, and the $\gamma$-robustness is $2H$.

The evaluation results demonstrate that our method improves both the tightness and efficiency of computing certified bounds.
For the $1D$ Noisy Sine dataset, while \textit{BNN-DP} demonstrates competitive performance in certification tightness,
to achieve comparable tightness with our method requires further tightening of the certification range $H$.
On the other hand, 
our method significantly reduces the computational overhead for certification. 
As the input dimension increases and the task gets more complex, the advantages of our method in certification performance become more significant than the baseline method.
Specifically, when $H=0.05$, our method surpasses the baseline in tightness for all remaining tasks.
Particularly, for the MNIST dataset, our method improves the tightness of the certified bounds by 87.3\% and 93.6\% for $H=0.1$ and $H=0.05$, respectively.
In terms of computation time, our method achieves a reduction of 76.0\% and 83.6\% for Kin8nm and MNIST under the $H=0.05$ setting.

\begin{table*}[ht]
\centering
\footnotesize	
\caption{Sample complexity for different hyper-parameters.}
\begin{tabular}{cccccccc}
\toprule
 \multirow{2}{*}{\textbf{Task}} & \multirow{2}{*}{\textbf{$\gamma$-robustness}} & \multicolumn{3}{c} {\textbf{$\beta=0.05$}} & \multicolumn{3}{c}{\textbf{$\beta=0.01$}} \\
         \cmidrule(lr){3-5} \cmidrule(lr){6-8}
& & \textbf{$\alpha = 1$} & \textbf{$\alpha = 0.5$} & \textbf{$\alpha = 0.25$} & \textbf{$\alpha = 1$} & \textbf{$\alpha = 0.5$} & \textbf{$\alpha = 0.25$} \\
\midrule
\multirow{2}{*}{Kin8nm} & $H = 0.1$   & 17648 & 69923 & 278354 & 26950 & 107129 & 427177\\
& $H = 0.05$  & 69923 & 278354 & 1110733 &107129 & 427177 & 1706025\\
\midrule
\multirow{2}{*}{MNIST} & $H = 0.1$   & 2520 & 9835 &  38844 & 3808 & 14986 & 59445\\
& $H = 0.05$ & 9835 & 38844 & 154379 & 14986 & 59445 & 236783 \\
\bottomrule
\end{tabular}
\label{tab:sample_complextiy}
\end{table*}

\subsubsection{RQ4: What is the sample complexity of our approach?} 
In Section \ref{sec:risk-averse-eval}, Proposition~\ref{prop:CVaR:Confi_bound} provides a theoretical relation between the sampling complexity and key hyper-parameters, including precision tightness, risk levels and confidence errors.
For the benchmark tasks, the performance function $h(y)$ -- which measures the output difference to the ground-truth value for regression tasks or label consistency for classification tasks -- is a scalar random variable. With the dimension $n$ reduced to 1, according to Proposition~\ref{prop:CVaR:Confi_bound}, the required sample size $N$ is inversely proportional to $H$ and the risk level $\alpha$, while depending logarithmically on the confidence level $\beta$.

To demonstrate the practical sampling feasibility of our approach, we compute the required sample size under different configurations to evaluate the impact of individual parameters on  sampling complexity.
The results are summarised in Table \ref{tab:sample_complextiy}.
Across all configurations, the sampling complexity is manageable, with the largest required sample size reaching $10^6$.
This is the case when evaluating a narrow 25\% worst-performing portion of the distribution with a tightness bound of 0.05 and a 99\% confidence guarantee.


\section{Conclusion}
We introduce a novel risk-averse evaluation method for computing certified bounds of Bayesian neural networks using the coherent risk measure CVaR. 
By leveraging sampling and optimisation, our approach approximates the output distribution and bounds the CVaR values with probabilistic guarantees.
We implement this method in a tool, \tool, and demonstrate its ability to compute sound approximations of output sets and quantify robustness under worst-performing conditions. The results show that \tool achieves improved certification tightness and better efficiency compared to existing methods.  
An interesting future direction is to extend our approach to the safety evaluation of closed-loop dynamical systems with BNN controllers.

\newpage
\acks{
MK and XZ received partial support from ELSA: European Lighthouse on
Secure and Safe AI project (Grant No. 101070617 under UK guarantee) and
the ERC under the European Union’s
Horizon 2020 research and innovation program (FUN2MODEL, Grant No. 834115).}
\bibliography{bibfile}
\end{document}